%%%%%%%%%%%%%%%%%%%%%%%%%%%%%%%%%%%%%%%%%%%%%%%%%%%%%%%%%%%%%%%%%%%%%%%%%%%%%%%%
%2345678901234567890123456789012345678901234567890123456789012345678901234567890
%        1         2         3         4         5         6         7         8

\documentclass[letterpaper, 10 pt, conference]{ieeeconf}  % Comment this line out if you need a4paper

\IEEEoverridecommandlockouts                              % This command is only needed if 
                                                          % you want to use the \thanks command

\overrideIEEEmargins                                      % Needed to meet printer requirements.

%In case you encounter the following error:
%Error 1010 The PDF file may be corrupt (unable to open PDF file) OR
%Error 1000 An error occurred while parsing a contents stream. Unable to analyze the PDF file.
%This is a known problem with pdfLaTeX conversion filter. The file cannot be opened with acrobat reader
%Please use one of the alternatives below to circumvent this error by uncommenting one or the other
%\pdfobjcompresslevel=0
%\pdfminorversion=4

% See the \addtolength command later in the file to balance the column lengths
% on the last page of the document

% The following packages can be found on http:\\www.ctan.org
%\usepackage{graphics} % for pdf, bitmapped graphics files
%\usepackage{epsfig} % for postscript graphics files
%\usepackage{mathptmx} % assumes new font selection scheme installed
%\usepackage{times} % assumes new font selection scheme installed
%\usepackage{amsmath} % assumes amsmath package installed
%\usepackage{amssymb}  % assumes amsmath package installed
% The following packages can be found on http:\\www.ctan.org
%\usepackage{graphics} % for pdf, bitmapped graphics files
%\usepackage{epsfig} % for postscript graphics files
%\usepackage{mathptmx} % assumes new font selection scheme installed
%\usepackage{times} % assumes new font selection scheme installed
\usepackage{subfiles}

\newcommand{\notinsubfile}[1]{}
\usepackage[linesnumbered,lined,ruled]{algorithm2e}
\usepackage[export]{adjustbox}

% https://tex.stackexchange.com/a/392210
\usepackage[T1]{fontenc}
\usepackage{amsmath} % assumes amsmath package installed
\usepackage{amssymb}  % assumes amsmath package installed
\usepackage{todonotes}

\usepackage{balance}

% https://tex.stackexchange.com/a/43837

\usepackage{amsthm}
\usepackage{graphicx}
\usepackage{subcaption}

\newcommand{\set}[2]{\{#1\: : \: #2\}}
\newcommand{\norm}[1]{\left\lVert #1\right\rVert}

\newcommand{\R}{\mathbb{R}}

\newcommand{\ra}{\rightarrow}

\DeclareMathOperator{\argmin}{arg\,min}

\newtheorem{theorem}{Theorem}

\newtheorem{proposition}{Proposition}
\theoremstyle{definition}
\newtheorem{example}{Example}

\theoremstyle{definition}
\newtheorem{definition}{Definition}
\theoremstyle{remark}
\newtheorem{remark}{Remark}

\title{\LARGE \bf
Model-Free Barrier Functions via Implicit Evading Maneuvers
}

\author{Eric Squires, Rohit Konda, Samuel Coogan, and Magnus Egerstedt$^\dagger$%
\thanks{$^\dagger$Eric Squires is with the Georgia Tech Research Institute.
    Rohit Konda is with the University of California Santa Barbara.
    Samuel Coogan is with the School of Electrical and
    Computer Engineering,
    Georgia Institute of Technology.
    Magnus Egerstedt is with the Samueli School of Engineering
        University of California, Irvine.
    This work was enabled by NSF Grant No. 1836932
    and the University System of Georgia
    Tuition Assistance Program.
}
}

\begin{document}

\maketitle
\thispagestyle{empty}
\pagestyle{empty}

%%%%%%%%%%%%%%%%%%%%%%%%%%%%%%%%%%%%%%%%%%%%%%%%%%%%%%%%%%%%%%%%%%%%%%%%%%%%%%%%
\begin{abstract}
This paper demonstrates that the safety override
arising from the use of a barrier function can in some cases be needlessly restrictive.
In particular, we examine the case of fixed-wing collision avoidance and show
that when using a barrier function, there are cases where two fixed-wing
aircraft can come closer to colliding than if there were no barrier function at
all. In addition, we construct cases where the barrier function
labels the system as unsafe even when the
vehicles start arbitrarily far apart.
In other words, the barrier function ensures safety but with unnecessary
costs to performance. We therefore
introduce model-free barrier functions which take a data driven approach to
creating a barrier function. We demonstrate the
effectiveness of model-free barrier functions in a collision avoidance
simulation of two fixed-wing aircraft.
\end{abstract}

\section{INTRODUCTION}

\label{sec_intro}

Barrier functions \cite{ames2017control}, a function of the state whose
derivative is bounded, can be used to maximize performance 
while ensuring safety. However, if the safety constraint
from the barrier function is
overly restrictive then performance can be diminished. For
example, in adaptive cruise control, safety designers can choose
a minimum inter-vehicle distance that the vehicle must satisfy. Setting
this distance too high will result in
excessive inter-vehicle distances where speed setpoints are difficult to
achieve. In other words, the performance goal (speed) is negatively
impacted by an overly conservative constraint (inter-vehicle distances).

In this paper we show a general solution to this problem and apply
it to 
fixed-wing unmanned aerial vehicle (FW-UAV) collision avoidance.
We first consider
the case where the barrier function ensures each vehicle
can maintain a straight trajectory without collisions.
In this case
even when the vehicles
are arbitrarily far apart
the barrier function can indicate the vehicles
are unsafe, resulting in performance degradation.
For instance, a vehicle located far away could orient itself in a way that makes the barrier function
imply an override is needed.
This can make the system unpredictable as
non-local factors (e.g. vehicles far away) can have an impact on
control choices. This could even be exploited by malevolent actors
who choose to orient their own aircraft in a way
that forces the aircraft to adjust in suboptimal ways.

Another case is a barrier function
that ensures vehicles
can employ a turning maneuver.
We construct a scenario where
using
a nominal controller designed for performance but not safety would result in
vehicle distances far greater than the threshold but a barrier function
results in a significant alteration that causes them to barely exceed the safety distance.
This reduces performance, increases safety risks from unmodelled noise,
and reduces trust as
observers see the safety override causing the vehicles to fly needlessly close.

Prior work has relaxed the override
while ensuring safety by
constructing a barrier function that 
accounts for the nominal controller.
In \cite{wang2018permissive} the authors 
maximize the set of safe states that are compatible
with a region of attraction to maximize performance.
Similarly, a nominal controller and barrier function are learned
simultaneously in \cite{qin2021learning}.
Imitation learning was used in \cite{robey2020learning}
where a barrier function is constructed 
from expert trajectories where the expert
can consider performance and safety
factors.
Barrier functions
have also been used to guide exploration in \cite{cheng2019end} via off
policy reinforcement learning (RL). Similarly, \cite{ma2021model}
introduces a barrier function to constrain
the policy update in RL.

Rather than training both the nominal controller and
safety override, we maximize the set of available safe controls
that could be applied to any nominal policy.
This separates concerns to simplify controller
design \cite{borrmann2015control}.
In particular, we show that
maximizing the set of safe states is not enough to ensure that an
override is not restrictive. In other words, given a state that is safe
for two different barrier functions, it may be that the set of
controls to keep the system safe is larger for a barrier function with a smaller overall safe set.

We also construct a barrier function without requiring a dynamics model
which differs from prior work on barrier functions with uncertainty
\cite{choi2020reinforcement,robey2021learning}.
This can reduce model mismatch
that can lead to
real-world performance degradation.
Further,
model-free approaches can often
outperform model-based systems \cite{nagabandi2018neural} as they are less restricted
in fitting to data.
Finally, the model-free approach of this paper enables a general solution that 
can be applied across a large class of problems with different dynamics
and safety constraints without having to manually re-derive a barrier function.
For instance, while we demonstrate the algorithm using FW-UAV collision avoidance,
the same algorithm could equally be applied to quadrotors.

Thus, we propose model-free barrier functions (MFBFs),
which are learned from interactions with the environment,
to reduce how much the system is overridden.
This approach differs from, for instance, model-free RL as it allows introspection of safety
characteristics to identify why safety override selections are made, whereas
introspection in model-free RL is difficult.
Contributions are the following.
First,
we motivate MFBFs with examples
from FW-UAV collision avoidance \cite{squires2019composition}
that demonstrate a model-based approach induces unnecessary overrides.
Second, we derive MFBFs.
Third, we demonstrate the approach in simulation.
A video of the behavior is available \cite{squires2021modelfreevideo}.
This paper is organized as follows. Section~\ref{sec_model_free_background}
introduces the background for barrier functions. Section~\ref{sec_model_free_theory}
derives MFBFs. Section~\ref{sec_model_free_simulation}
demonstrates the algorithm in simulation. Contents of this paper have
previously appeared in the thesis \cite{squires2021barrier}.

\section{Background}
\label{sec_model_free_background}

In this paper we motivate model-based and model-free barrier functions
with FW-UAV collision avoidance.
Given
two FW-UAVs indexed by $i$ ($i\in\{1,2\}$), vehicle $i$ state
and control inputs are
$x_{k,i} = \begin{bmatrix} p_{k,i,x} & p_{k,i,y} & \theta_{k,i} & p_{k,i,z} \end{bmatrix}^T$
and
$u_{k,i} = \begin{bmatrix} v_{k,i} & \omega_{k,i} & \zeta_{k,i}\end{bmatrix}^T$,
where
$p_{k,i,x}$, $p_{k,i,y}$, and $p_{k,i,z}$ are the $x$, $y$, and $z$ position
while $v_{k,i}$, $\omega_{k,i}$, and $\zeta_{k,i}$ are the translational,
rotational, and vertical velocities with
$v_{min}\le v_{k,i} \le v_{max}$, $v_{min} > 0$,
$|\omega_{k,i}| \le \omega_{max}$,
and 
$|\zeta_{k,i}| \le \zeta_{max}$.
The discrete time dynamics for vehicle $i$ are
\[
x_{k+1,i} = 
\begin{bmatrix}
p_{k,i,x} + v_{k,i} \cos \theta_{k,i}\Delta t \\
p_{k,i,y} + v_{k,i} \sin \theta_{k,i}\Delta t \\
\theta_{k,i} + \omega_{k,i}\Delta t \\
p_{k,i,z} + \zeta_{k,i}\Delta t
\end{bmatrix}.
\]
The two FW-UAV system has state
$x_k = \begin{bmatrix} x_{k,1}^T & x_{k,2}^T \end{bmatrix}^T$
with dynamics of the form
\begin{equation}
x_{k+1} = f(x_k,u_k)
\label{eq_discrete_system}
\end{equation}
where $x_k\in \R^n$, $u_k\in U \subset \R^m$, and $U$ is the set
of available controls for the system.
In the system above of two FW-UAVs, $n=8$ and $m=6$.
We briefly summarize \cite{agrawal2017discrete}, which develops barrier functions
for discrete time with 
dynamics in (\ref{eq_discrete_system}).
Let $h:\R^n\to \R$ be an output function of the state
and define the safe set $\mathcal{C}$ as a superlevel set of
$h$ so that
\begin{equation}
\mathcal{C} = \set{x_k \in \R^n}{h(x_k) \ge 0}.
\label{eq_discrete_safe_set}
\end{equation}
Let $\Delta h(x_{k},u_k) = h(x_{k+1}) - h(x_k)$.
The following definition
is an adaptation from Definition 4 of \cite{agrawal2017discrete}
using terminology similar to \cite{ames2017control}.

\begin{definition}
A map $h:\R^n \to \R$ is a \emph{Discrete-Time Exponential
Control Barrier Function (DT-ECBF)} on a set $\mathcal{D}$ where $\mathcal{C}\subseteq \mathcal{D}$ if
there is a $u_k\in \R^m$ and $\lambda$ such that 
$\Delta h(x_k,u_k) +\lambda h(x_k) \ge 0$
and $0\le \lambda \le 1$
for all $x_k\in\mathcal{D}$.
\label{def_dtecbf}
\end{definition}

The following is an adaptation from Proposition 4 of
\cite{agrawal2017discrete} using the admissible
control space \cite{ames2017control} defined as
\begin{equation}
K(x_k) = \set{u_k\in U}{\Delta h(x_k, u_k) + \lambda h(x_k) \ge 0}.
\label{eq_discrete_admissible_control_space}
\end{equation}

\begin{proposition}
Given a set $\mathcal{C}\subset \R^n$ defined in (\ref{eq_discrete_safe_set})
for an output function $h$, let $h$ be a DT-ECBF on $\mathcal{D}$
and $u:\R^n\ra U$ be such that $u(x_k)\in K(x_k)$
for all $x_k\in D$.
If $x_0\in \mathcal{C}$ then
$x_k\in \mathcal{C}$ for all $k > 0$.
\label{prop_discete_ecbf}
\end{proposition}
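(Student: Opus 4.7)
The plan is a direct induction on the time index $k$, using the admissibility of the control at each step together with the sign constraint $0 \le \lambda \le 1$ built into the DT-ECBF definition.

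First I would set up the base case: by hypothesis $x_0 \in \mathcal{C}$, so $h(x_0) \ge 0$ directly from the definition (\ref{eq_discrete_safe_set}). I would then state the inductive hypothesis that $x_k \in \mathcal{C}$ for some $k \ge 0$ and aim to conclude $x_{k+1} \in \mathcal{C}$.

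Next I would chain together the containment $\mathcal{C} \subseteq \mathcal{D}$ and the hypothesis that $u(x) \in K(x)$ for all $x \in \mathcal{D}$: since $x_k \in \mathcal{C} \subseteq \mathcal{D}$, the control $u(x_k)$ lies in the admissible set $K(x_k)$ defined in (\ref{eq_discrete_admissible_control_space}), which means $\Delta h(x_k, u(x_k)) + \lambda h(x_k) \ge 0$. Unpacking $\Delta h(x_k, u(x_k)) = h(x_{k+1}) - h(x_k)$ and rearranging gives $h(x_{k+1}) \ge (1 - \lambda) h(x_k)$. From the inductive hypothesis $h(x_k) \ge 0$ and the DT-ECBF range condition $0 \le \lambda \le 1$, the factor $(1-\lambda)$ is nonnegative, so $h(x_{k+1}) \ge 0$, i.e., $x_{k+1} \in \mathcal{C}$. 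Induction then closes the argument for all $k > 0$.

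There is no substantive obstacle here; the proof is essentially bookkeeping. The only subtle point worth flagging explicitly is that invoking $u(x_k) \in K(x_k)$ requires $x_k \in \mathcal{D}$, and this is exactly why the condition $\mathcal{C} \subseteq \mathcal{D}$ is assumed in the proposition statement. In a more careful write-up I would highlight that step so the reader sees that the inductive hypothesis propagates us not just inside $\mathcal{C}$ but also inside the larger domain $\mathcal{D}$ on which the barrier inequality is guaranteed to hold.
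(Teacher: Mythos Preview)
Your induction argument is correct and is the standard way to establish forward invariance from the barrier inequality: the key inequality $h(x_{k+1}) \ge (1-\lambda)h(x_k)$ together with $0 \le \lambda \le 1$ and $h(x_k)\ge 0$ gives $h(x_{k+1}) \ge 0$, and you correctly note that $\mathcal{C}\subseteq\mathcal{D}$ is what licenses applying $u(x_k)\in K(x_k)$ at each step. The paper itself does not supply a proof of this proposition; it is stated as an adaptation of Proposition~4 in \cite{agrawal2017discrete} and invoked without argument, so there is nothing in the paper to compare against beyond the statement itself.
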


If the system has a nominal controller $\hat{u}_k$ that does not necessarily
ensure safety, an optimization can
select a control value $u_k^*$ as close as possible
to $\hat{u}_k$ while ensuring
safety:
\begin{IEEEeqnarray}{rCl}
u_k^* &=& \argmin_{u_k\in U} \frac{1}{2}\norm{u_k - \hat{u}_k}^2 \label{eq_orig_nonconvex_optimization}\\
&\text{ s.t. }& u_k\in U \IEEEnonumber\\
&& u_k\in K(x_k).\IEEEnonumber
\end{IEEEeqnarray}
Equation \eqref{eq_orig_nonconvex_optimization}
is nonconvex \cite{agrawal2017discrete} and we resolve this 
by assuming $U$ is a discrete
set.

\section{Generating a Model-Free Barrier Function via Evasive Maneuvers}
\label{sec_model_free_theory}

\subsection{Constructing Barrier Functions For Discrete Time}
In \cite{squires2019composition} the authors demonstrate how to construct a barrier function for
continuous time systems so we first adapt
that method to discrete time.
A similar approach is \cite{gurriet2018online}
although \cite{squires2019composition} does not require a backup set.
Let $\rho:\R^n\to \R$ be a safety function
that must be nonnegative at all times for the system to be safe.
Let $\gamma:\R^n\to U$ be an evasive maneuver.
Note that $\gamma$ is not the safety override
but instead constructs a barrier function.
A candidate DT-ECBF is the worst case
safety value after forward propagating
the state using $\gamma$. Let 
\begin{equation}
h(x_0) = \inf_{k \ge 0} \rho(\hat{x}_k)
\label{eq_discrete_h}
\end{equation}
where
$\hat{x}_0 = x_0$ and $\hat{x}_{k+1} = f(\hat{x}_{k},\gamma(\hat{x}_{k}))$ for $k > 0$.
In forming a MFBF, we treat $f$ as a black box simulation model.

\begin{theorem}
Given a dynamical system (\ref{eq_discrete_system})
and a function $h$ defined in (\ref{eq_discrete_h})
with a safety function $\rho$ and an evasive maneuver $\gamma$,
$h$ is a DT-ECBF on the set $\mathcal{C}$.
\label{th_discrete_h_zcbf}
\end{theorem}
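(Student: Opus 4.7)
My plan is to verify Definition~\ref{def_dtecbf} directly by exhibiting, at each $x\in\mathcal{C}$, an admissible control. The natural candidate is $u=\gamma(x)$ itself, paired with $\lambda=0$ (indeed any $\lambda\in[0,1]$ will work). So the goal reduces to showing $\Delta h(x,\gamma(x))\ge 0$ for every $x\in\mathcal{C}$, which together with $h(x)\ge 0$ on $\mathcal{C}$ immediately yields the DT-ECBF inequality.

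The key observation is a shift/time-homogeneity property of the trajectory generated by $\gamma$. Fix $x_0\in\mathcal{C}$ and let $\hat{x}_0,\hat{x}_1,\hat{x}_2,\ldots$ be the propagated states used to define $h(x_0)$ in \eqref{eq_discrete_h}. Apply the control $u=\gamma(x_0)$, so the next state of the actual system is $x_1=f(x_0,\gamma(x_0))=\hat{x}_1$. Now propagate $\gamma$ forward starting from $x_1$ and call the resulting sequence $\tilde{x}_0,\tilde{x}_1,\ldots$. Since $\gamma$ is a state-feedback map and $f$ is time-invariant, a one-line induction shows $\tilde{x}_k=\hat{x}_{k+1}$ for all $k\ge 0$.

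Consequently,
\[
h(x_1)=\inf_{k\ge 0}\rho(\tilde{x}_k)=\inf_{k\ge 0}\rho(\hat{x}_{k+1})=\inf_{k\ge 1}\rho(\hat{x}_k)\ge \inf_{k\ge 0}\rho(\hat{x}_k)=h(x_0),
\]
because an infimum over a smaller index set is at least as large. Hence $\Delta h(x_0,\gamma(x_0))=h(x_1)-h(x_0)\ge 0$, and with any $\lambda\in[0,1]$ and the fact that $h(x_0)\ge 0$ on $\mathcal{C}$, the inequality $\Delta h(x_0,\gamma(x_0))+\lambda h(x_0)\ge 0$ follows.

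Honestly the main obstacle is essentially nonexistent; this proof is bookkeeping built on the shift identity $\tilde{x}_k=\hat{x}_{k+1}$. The one place to be careful is writing that identity cleanly, since it is the step that uses the assumptions that $\gamma$ depends only on state (not time or history) and that $f$ is autonomous. A minor subtlety worth mentioning is that the infimum in \eqref{eq_discrete_h} is well defined on $\mathcal{C}$: it is bounded below by $0$ by assumption, so $h$ is finite-valued there and the inequality manipulations are justified without any compactness or continuity hypotheses on $\rho$.
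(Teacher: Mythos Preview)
Your proof is correct and follows essentially the same approach as the paper: choose $u=\gamma(x)$, observe that $h(f(x,\gamma(x)))=\inf_{k\ge 1}\rho(\hat{x}_k)\ge\inf_{k\ge 0}\rho(\hat{x}_k)=h(x)$, and combine $\Delta h\ge 0$ with $h(x)\ge 0$ on $\mathcal{C}$. The only difference is that you spell out the shift identity $\tilde{x}_k=\hat{x}_{k+1}$ explicitly, which the paper leaves implicit when it writes $\Delta h(x,\gamma(x))=\inf_{k\ge 1}\rho(\hat{x}_k)-\inf_{k\ge 0}\rho(\hat{x}_k)$.
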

\begin{proof}
Suppose $x_0\in\mathcal{C}$ so that $h(x_0) \ge 0$.
Then
$\Delta h(x_0, \gamma(x_0))
= \inf_{k\ge 1}\rho(\hat{x}_k) - \inf_{k\ge 0}\rho(\hat{x}_k).$
The right hand side is nonnegative because
it is the subtraction of the infimum of the same function
on different intervals where the first interval
is a subset of the second interval.
Then $\Delta h(x_0,\gamma(x_0)) \ge 0$. Recalling
as well
that $x_0\in\mathcal{C}$ means that $h(x_0) \ge 0$,
this implies that $\Delta h(x_0,\gamma(x_0)) + \lambda h(x_0) \ge 0$.
Then $\gamma(x_0) \in K(x_0)$ so $h$ is a DT-ECBF.
\end{proof}
\begin{remark}
This theorem and proof are similar to 
Theorem 2 of \cite{squires2019composition} but for
discrete time.
Although in Definition~\ref{def_dtecbf} $\mathcal{D}$ can be larger than
$\mathcal{C}$, Theorem~\ref{th_discrete_h_zcbf}
is only valid for $\mathcal{C} = \mathcal{D}$.
See \cite{squires2019composition} for conditions
for $\mathcal{C}\subset \mathcal{D}$ in continuous time.
\end{remark}

\subsection{The Effect of The Evasive Maneuver on Safe Sets}
\label{sec_effect_of_gamma}
While Theorem~\ref{th_discrete_h_zcbf} shows
that $h$ in (\ref{eq_discrete_h}) is a DT-ECBF
and can be used to guarantee safety,
different choices of $\gamma$ can result
in drastically different safe sets.
Consider the two examples given in \cite{squires2019composition}
where
$\rho(x_k) = d_{1,2}(x_k) - D_s$,
$d_{1,2}$ is the distance between the vehicles, and $D_s$
is the safety threshold.
An evasive maneuver where two vehicles
turn at the same rate but have possibly different speeds
is given by
$\gamma_{turn} = \begin{bmatrix} \eta v & \omega & 0 & v & \omega & 0 \end{bmatrix}^T$
where $0 < \eta \le 1$.
A second evasive maneuver where two vehicles that stay
straight for all time is given by
$\gamma_{straight} = \begin{bmatrix} v_1 & 0 & \zeta_1 & v_2 & 0 & \zeta_2 \end{bmatrix}^T.$
We denote $h_{turn}$
and $h_{straight}$
as the $h$ in (\ref{eq_discrete_h}) constructed
from $\gamma_{turn}$ and $\gamma_{straight}$,
respectively.
These evasive maneuvers are considered
in \cite{squires2019composition} because they enable
a closed form solution to (\ref{eq_discrete_h})
so that the barrier function can be
calculated in real-time.
We consider some examples
where the safe set implied by $h_{turn}$ and $h_{straight}$
results
in either an unnecessary override
or labeling states as unsafe that have ample room to avoid a collision.
A graphical view of these scenarios is in
Fig~\ref{fig_diff_safe_sets}. The path
traversed by the vehicles for Example~\ref{ex_unnecessary_override}
is in Fig~\ref{fig_start_on_left}.

\begin{example}\emph{States Are Labelled Unsafe Where Collisions Can Be Avoided.}
\label{ex_unsafe_states}
For $h_{turn}$ consider an initial 
condition where the two vehicles are 
at the same altitude with orientations pointing at each other.
Then no matter how far apart the vehicles start,
(\ref{eq_discrete_h}) yields $h = -D_s$,
implying the initial conditions
are unsafe. This is because $\gamma_{straight}$
implies a future collision.
As the vehicles are placed arbitrarily
far apart, there is time to turn
to avoid a collision. Nevertheless, according
to $h_{straight}$,
this configuration is outside of the safe set.
Note that this scenario has been previously discussed
in \cite{squires2021sensing} where it was shown 
that there does not exist a finite range sensor
to ensure safety given $h_{straight}$.
\end{example}

\begin{example}\emph{An Unnecessary Invasive Override.}
\label{ex_unnecessary_override}
While $h_{turn}$ does not have the issue in Example~\ref{ex_unsafe_states},
there are other initial conditions that lead to 
an unnecessary override with $h_{turn}$.
Suppose the vehicles pass on the
left with a lateral separation of more than
the safety distance but less than four turn radii. Then
if the vehicles continue straight the vehicles will eventually
approach an unsafe condition according to $h_{turn}$
and the overriding
safety controller will induce a large path correction
so that each vehicle can pass on the others' right.
\end{example}
\begin{figure}
\begin{subfigure}{0.45\columnwidth}
\centering
\def\svgwidth{\textwidth}
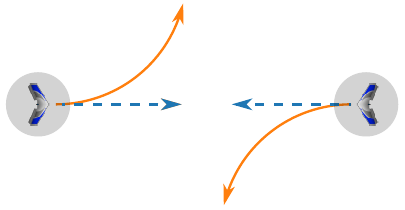
\label{fig_diff_safe_sets_gamma_straight}
\end{subfigure}
\begin{subfigure}{0.45\columnwidth}
\centering
\def\svgwidth{\textwidth}
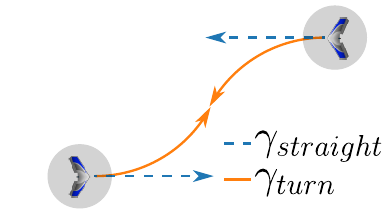
\label{fig_diff_safe_sets_gamma_turn}
\end{subfigure}
\caption{Vehicles are not safe (left) facing each other with $h_{straight}$,
(right) passing on the left with $h_{turn}$.
}
\label{fig_diff_safe_sets}
\end{figure}
\begin{figure}
\centering
\input{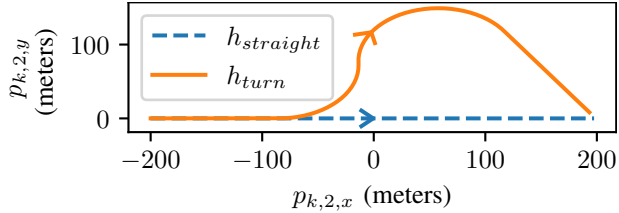}
\caption{%
Given the Fig~\ref{fig_diff_safe_sets} (right) setup,
$h_{turn}$ significantly alters 
the vehicle 2 trajectory but
$h_{straight}$ does not.
}
\label{fig_start_on_left}
\end{figure}
We also 
plot the set of unsafe states for a variety of configurations
(Fig~\ref{fig_safe_sets_stdevs}) to
demonstrate
that even when the vehicles are not pointing at each other,
the vehicles can be spaced far apart and be in an unsafe state
with $h_{straight}$.
Further, Fig~\ref{fig_safe_sets_stdevs} (top left)
shows that the vehicles are unsafe even when
they have flown past each other with $h_{turn}$.
These examples demonstrate cases
where a barrier function results in restrictive
overrides. 
This paper resolves these issues by fitting a barrier function
whose safe set iteratively grows as well as increases
the admissible control space.
The method is not specific to FW-UAV collision avoidance.

\subsection{An Initial Model-Free Barrier Function}
\label{sec_initial_mf}
The issues in Figures~\ref{fig_diff_safe_sets}
and~\ref{fig_safe_sets_stdevs} result because
the $\gamma$ used to calculate $h$ are constant.
While more complicated $\gamma$ may be preferable, it makes
(\ref{eq_discrete_h}) difficult to solve in closed form.
To resolve this,
we propose a data driven approach.
To do so, we start the state at some
$x_0\in \R^n$ and apply some evasive maneuver
$\gamma$.\footnote{%
$x_0$ is sampled from $\R^n$
rather than $\mathcal{D}$ during 
the data-generation phase.
Otherwise the data would have a bias toward safe prediction.
}
If an evasive maneuver has not been specified,
let $\gamma = \hat{u}$.
Given an evasive maneuver,
we create a sequence $\{x_k\}_{k=0}^T$
where $T$ is some horizon over which safety is evaluated.
In the case of FW-UAV collision avoidance, $T$ may represent
battery life of the vehicles after which collisions will not occur.

Note that the sequence $\{x_k\}_{k=0}^T$
is the enumeration of states on the right
hand side of (\ref{eq_discrete_h}).
Thus, given a starting state $x_0$, 
$\rho_{min} = \min_{k\ge 0} \rho(x_k)$ is a sample $h(x_0)$.
Suppose this process is repeated $N$ times
to form a dataset $D = \{(x_0^j,\rho_{min}^j)\}_{j=1}^N$.
Then we 
can fit a function $\hat{h}$ to approximate the mapping (\ref{eq_discrete_h})
with the dataset $D$.
In the perfect case without error
we are left with a function that directly calculates (\ref{eq_discrete_h})
without having to do the integration
because the integration is implicit in the fitting of the data.

However, when fitting $\hat{h}$
there will be errors.
Errors where the learned $\hat{h}$ is less
than the true $h$ leads
to conservative behavior by considering
states to be unsafe that are actually safe.
However, when $\hat{h}$
over predicts, it can imply
the state is safe when it is not.
A conservative approach is to bias
the learned $\hat{h}$ downward to reflect uncertainty.
This can be done by biasing the loss function
\cite{srinivasan2020synthesis}
or alternatively with
a Bayesian approach (e.g., Gaussian Processes 
% \cite{rasmussen2006gaussian}
were used for barrier functions in \cite{wang2018safe}. Bayesian
neural networks \cite{blundell2015weight,gal2016uncertainty} can also output an uncertainty)
by subtracting a desired number of standard deviations (denoted $\sigma$)
from the model output.
We note though that while this method reduces
the chances that the fitted $\hat{h}$ will over predict
the true $h$, because it cannot be guaranteed
this type of error does not occur, the strict
safety guarantee arising from Theorem~\ref{th_discrete_h_zcbf} 
is lost.

\subsection{Iteratively Expanding the Admissible Control Space}

Consider the output of $\hat{h}$ when applied to FW-UAV
collision avoidance with a waypoint following nominal controller.
Position two vehicles arbitrarily far apart with waypoints located at
the starting position of the other vehicle, and orientations
pointing at their respective waypoint.
This configuration will be unsafe for $\hat{h}$
for the same reason as described in Example~\ref{ex_unsafe_states}.
We now show how to improve on this initial estimated $\hat{h}$
with an iterative algorithm.

We examine the case where
a barrier function $h$ is available and
generate a new barrier function $h^1$
with a larger safe set than $h$.
Given $x_0\in \R^n$ and
$\hat{u}_k \in \R^m$,
let $\gamma^{1}:\R^n\ra U$ be the output\footnote{%
Note that because $x_0$ is sampled from $\R^n$ rather
than $\mathcal{D}$ it is not guaranteed that
the optimization program has a solution
when $x_0\notin \mathcal{D}$.
This can be resolved for instance by adding a slack variable.
}
of \eqref{eq_orig_nonconvex_optimization}.
Then $\gamma^{1}$ can be used as an evasive
maneuver since it is a function that maps to the 
action space as required by Theorem~\ref{th_discrete_h_zcbf}.
Thus, we form a new barrier function
$h^{1}$ via (\ref{eq_discrete_h}) with safe set $\mathcal{C}^1$ such that 
\begin{IEEEeqnarray}{C}
h^{1}(x_0) = \min_{k \ge 0} \rho(\hat{x}_k), \label{eq_discrete_overline_h} \\
\hat{x}_{k+1} = f(\hat{x}_{k},\gamma^{1}(\hat{x}_{k})). \label{eq_discrete_overline_xhat_k}
\end{IEEEeqnarray}

\begin{theorem}
Given a dynamical system (\ref{eq_discrete_system})
let $h$ be defined in (\ref{eq_discrete_h})
with safety function $\rho$ and evasive maneuver $\gamma$.
Let
$h^{1}$ be defined in (\ref{eq_discrete_overline_h})
with safety function $\rho$ and evasive maneuver $\gamma^{1}$
defined as the output of (\ref{eq_orig_nonconvex_optimization}).
Then $\mathcal{C}\subseteq \mathcal{C}^1$.
\label{th_discrete_bigger_safe_set}
\end{theorem}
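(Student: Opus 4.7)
The plan is to show that any $x_0 \in \mathcal{C}$ also lies in $\mathcal{C}^1$, i.e., $h^1(x_0) \ge 0$, by using Proposition~\ref{prop_discete_ecbf} to keep the $\gamma^1$-trajectory inside $\mathcal{C}$ and then lower-bounding $\rho$ along that trajectory by $h$.

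First, I would verify that $\gamma^1$ is a well-defined map into $K$ on $\mathcal{C}$. The optimization \eqref{eq_orig_nonconvex_optimization} is feasible at every $x \in \mathcal{C}$ because, by the proof of Theorem~\ref{th_discrete_h_zcbf}, the original evasive maneuver satisfies $\gamma(x) \in K(x)$; hence the feasible set is nonempty and $\gamma^1(x) \in K(x)$ for all $x \in \mathcal{C}$. This handles the concern raised in the footnote, since we only need feasibility on $\mathcal{D} = \mathcal{C}$.

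Next, fix $x_0 \in \mathcal{C}$ and let $\hat{x}_{k+1} = f(\hat{x}_k, \gamma^1(\hat{x}_k))$ as in \eqref{eq_discrete_overline_xhat_k}. I would invoke Proposition~\ref{prop_discete_ecbf} applied to the DT-ECBF $h$ with controller $\gamma^1$: since $\gamma^1(x) \in K(x)$ for all $x \in \mathcal{C}$ and $\hat{x}_0 = x_0 \in \mathcal{C}$, an easy induction gives $\hat{x}_k \in \mathcal{C}$, i.e., $h(\hat{x}_k) \ge 0$, for every $k \ge 0$.

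Finally, I would use the definition \eqref{eq_discrete_h}: taking $k=0$ in the infimum shows $h(x) \le \rho(x)$ for all $x$, so $\rho(\hat{x}_k) \ge h(\hat{x}_k) \ge 0$ for every $k \ge 0$. Therefore $h^1(x_0) = \min_{k \ge 0} \rho(\hat{x}_k) \ge 0$, which gives $x_0 \in \mathcal{C}^1$ and hence $\mathcal{C} \subseteq \mathcal{C}^1$. The only real subtlety is the feasibility of the inner optimization defining $\gamma^1$; once that is secured via the original $\gamma$, the remainder is a direct application of the already-established forward-invariance property.
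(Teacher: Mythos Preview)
Your proposal is correct and follows essentially the same approach as the paper: establish that $\gamma^1(x)\in K(x)$ on $\mathcal{C}$, invoke forward invariance (Proposition~\ref{prop_discete_ecbf}) to keep the $\gamma^1$-trajectory in $\mathcal{C}$, and conclude $\rho(\hat{x}_k)\ge 0$ for all $k$. Your version is in fact more explicit than the paper's, since you spell out the intermediate inequality $\rho(x)\ge h(x)$ (from taking $k=0$ in \eqref{eq_discrete_h}) and address feasibility of \eqref{eq_orig_nonconvex_optimization} directly, whereas the paper compresses these steps into a single sentence.
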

\begin{proof}
Let $x_0\in \mathcal{C}$.
From Proposition~\ref{th_discrete_h_zcbf},
because $\gamma^{1}$ maps to values in
$K(x_k)$ for all $x_k\in \mathcal{D}$,
$\rho(\hat{x}_k) \ge 0$ for $k \ge 0$
where $\hat{x}_k$
is defined in (\ref{eq_discrete_overline_xhat_k}).
Then $h^{1}(x_0) \ge 0$. Then $x_0\in\mathcal{C}^1$.
\end{proof}

Theorem~\ref{th_discrete_bigger_safe_set} says that 
by using $\gamma^{1}$ rather than $\gamma$
as the evasive maneuver, the safe set does
not get smaller.
We now show a case where $\mathcal{C}$ is a strict subset of $\mathcal{C}^1$.

\begin{example}
Consider a discrete double integrator system
\begin{equation}
x_{k+1} =
\begin{bmatrix} 1 & \Delta t \\ 0 & 1 \end{bmatrix} x_k
+ \begin{bmatrix}0 \\ \Delta t \end{bmatrix} u_k,
\end{equation}
where $x_{k,1}, x_{k,2}$ are the position and velocity, respectively.
Let $\rho(x_k) = x_{k,1}$ so the system is point wise safe when
the position is nonnegative, $\gamma(x_k) = 1$,
$\Delta t = 0.1$, and $x_0 = \begin{bmatrix} 0.5 &  -1\end{bmatrix}^T$.
Then $h(x_0) = -0.05$ so $x_0\notin \mathcal{C}$.
In the case where $\hat{u}_k = 2\in U$, the result
of \eqref{eq_orig_nonconvex_optimization}
is $\gamma^1(x_k) = 2$. 
Then using $\gamma^1$ to construct $h^1$ via \eqref{eq_discrete_overline_h},
$h^{1}(x_0) = 0.2$
so $x_0 \in \mathcal{C}^1$.
\label{ex_double_integrator}
\end{example}
The point of Example~\ref{ex_double_integrator}
is that $\gamma^{1}$ can in some cases
do a better job at avoiding unsafe conditions
and as a result the safety set is enlarged.
However, as discussed 
in Section~\ref{sec_initial_mf}, 
to apply 
Theorem~\ref{th_discrete_bigger_safe_set},
one needs to forward propagate
the dynamics (\ref{eq_discrete_overline_xhat_k})
for all future time where the controller at every future timestep is the result
of a nonconvex program (\ref{eq_orig_nonconvex_optimization})
and return the minimum
$\rho(x_k)$ for the resulting sequence $\{\hat{x}_k\}_{k=0}^T$.
For online safety overrides, this is not computationally feasible.
Thus, we pursue the data driven approach discussed
in Section~\ref{sec_initial_mf}. See
Algorithm~\ref{alg_initial_hat_overline_h}.

\begin{algorithm}
\caption{Initial algorithm for learning a MFBF.}
\label{alg_initial_hat_overline_h}
\SetKwInOut{Input}{input}
\SetKwInOut{Output}{output}

\SetKwFunction{FuncExpandSafeSet}{ExpandSafeSet}
\SetKwProg{Fn}{Function}{:}{}

\Input{%
    $h$ (barrier function),
    $N$ (number of samples),
    $\hat{u}$ (nominal controller),
    $T$ (safety horizon)
}
\Output{$\hat{h}^{1}$}

\Fn{\FuncExpandSafeSet{$h$, $\rho$, $N$, $T$}}{%
    $D=\{\}$\;
    \Repeat{repeated $N$ times}{%
        select a random $x_0$\;
        $x\leftarrow x_0$\;
        $\rho_{min}\leftarrow\rho(x)$\;
        \Repeat{repeated $T$ times}{%
            $\gamma^1 \leftarrow $ from equation \eqref{eq_orig_nonconvex_optimization} using $x$, $h$, and $\hat{u}$\nllabel{line_choose_u}\;
            $x\leftarrow f(x,\gamma^{1})$\;
            $\rho_{min}\leftarrow \min(\rho_{min},\rho(x))$\;
        }
        append $\{x_0,\rho_{min}\}$ to $D$\nllabel{line_dataset}\;
    }
    $\hat{h}^{1}\leftarrow$fit to $D$\;
    \KwRet$\hat{h}^1$\;
}

\end{algorithm}

Given Theorem~\ref{th_discrete_bigger_safe_set},
if there are
no errors in fitting $\hat{h}^1$, we expect
that $\mathcal{C}^1$ will be a superset
of $\mathcal{C}$.
However, we can continue this process
to form $\gamma^2$ with the property that
$\gamma^2(x_k) \in K^1(x_k)$ for all $x_k\in \mathcal{C}^1$
where
$K^1(x_k) = \set{u\in U}{\Delta \hat{h}^1(x_k, u_k) + \lambda \hat{h}^1(x_k) \ge 0}.$
See Algorithm~\ref{alg_iterative_h}.
For a barrier function $h^i$ we denote
the admissible control space by $K^i$
and the safe set by $\mathcal{C}^i$.
However, the next example shows that
for $i > j$,
$\mathcal{C}^j \subseteq \mathcal{C}^i$
does not always imply 
$K^j(x_k) \subseteq K^i(x_k)$ for all $x_k\in C^j$.

\begin{example}
Consider the system
in Example~\ref{ex_double_integrator}.
Let $x_0 = \begin{bmatrix} 2 & -1 \end{bmatrix}$,
$\gamma(x_k) = 0.5$, and $\lambda = 0.9$.
Then a numerical calculation 
shows that $K(x_0) = \{u_0\::\: u_0 \ge -3.77\}$.
Let $\hat{u}(x_k) = 1$ if $x_{k,0} = 2$ and $x_{k,1} = -1$,
and $\hat{u}(x_k) = 0.5$ otherwise.
Then $K^1(x_0) = \{u_0\::\: u_0 \ge -3.67\}$.
Thus, although Theorem~\ref{th_discrete_bigger_safe_set}
shows that $\mathcal{C}\subseteq\mathcal{C}^1$,
$K(x_0) \not\subseteq K^1(x_0)$.
\label{ex_kx_not_subset}
\end{example}
Example~\ref{ex_kx_not_subset} shows that even though
the safe set is enlarged when using Algorithm~\ref{alg_iterative_h},
the set of controls available to keep the
system safe may be reduced. This means that there may be a more aggressive
safety override when using $h^1$
rather than $h$.
Thus, we use the maximum
of the barrier functions $h^j$ for $j \le i$ in
Algorithm~\ref{alg_iterative_h}.
Note
that maximums for boolean
composition of barrier functions for 
continuous time systems was analyzed
in \cite{glotfelter2017nonsmooth}. Here we additionally
show that a maximum of barrier functions is a barrier function.

\begin{algorithm}
\caption{Iteratively Expanding the Safe Set}
\label{alg_iterative_h}
\SetKwInOut{Input}{input}
\SetKwInOut{Output}{output}
\SetKwFunction{FuncExpandSafeSet}{ExpandSafeSet}
\SetKwProg{Fn}{Function}{:}{}

\Input{%
    $h$ (barrier function),
    $N$ (number of samples),
    $\hat{u}$ (nominal controller),
    $T$ (safety horizon),
    $L$ (number of expansions)
}
\Output{$\hat{h}^{L}$}
$\hat{h}^0\leftarrow h$\;
\For{$i\leftarrow 1$ \KwTo $L$}{%
    $\hat{h}^i\leftarrow $ExpandSafeSet($\hat{h}^{i-1}$, $\rho$, $N$, $\hat{u}$, $T$)\;
}
\end{algorithm}

\begin{theorem}
Given a dynamical system (\ref{eq_discrete_system})
and DT-ECBFs $h^1$ and $h^2$, the function $h^3$
defined by $h^3(x_k) = \max(h^1(x_k),h^2(x_k))$
is a DT-ECBF on $\mathcal{C}^1\cup \mathcal{C}^2$.
Further,
if $x_0\in C^1\cup C^2$,
$K^1(x_0)\subseteq K^3(x_0)$ or $K^2(x_0)\subseteq K^3(x_0)$.
\label{th_max_is_bf}
\end{theorem}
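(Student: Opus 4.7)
My plan is to choose $\lambda_3 = 1$ as the DT-ECBF parameter for $h^3$, then exploit the elementary fact that the max of two quantities is at least each of them. First, I would observe that $h^3(x) \ge 0$ holds iff $h^1(x) \ge 0$ or $h^2(x) \ge 0$, so the safe set associated with $h^3$ equals $\mathcal{C}^1 \cup \mathcal{C}^2$, which matches the domain on which the DT-ECBF claim is made.

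To verify the DT-ECBF property, I would fix an arbitrary $x \in \mathcal{C}^1 \cup \mathcal{C}^2$ and, without loss of generality, assume $x \in \mathcal{C}^1$. Because $h^1$ is a DT-ECBF on $\mathcal{C}^1$, there exists $u \in K^1(x)$, which gives $h^1(f(x,u)) \ge (1-\lambda_1)\, h^1(x) \ge 0$ since $h^1(x) \ge 0$. The max inequality then yields $h^3(f(x,u)) \ge h^1(f(x,u)) \ge 0$, i.e., $\Delta h^3(x,u) + 1 \cdot h^3(x) = h^3(f(x,u)) \ge 0$, so the DT-ECBF condition of Definition~\ref{def_dtecbf} holds for $h^3$ with $\lambda_3 = 1$; the case $x \in \mathcal{C}^2$ is symmetric.

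For the inclusion $K^1(x) \subseteq K^3(x)$ on $\mathcal{C}^1$, I would take any $u \in K^1(x)$ and apply the same chain to conclude $h^3(f(x,u)) \ge h^1(f(x,u)) \ge (1-\lambda_1)\, h^1(x) \ge 0 = (1-\lambda_3)\, h^3(x)$, so $u \in K^3(x)$; the $K^2$ inclusion is analogous by symmetry.

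I expect the main obstacle to be the choice of $\lambda_3$, and I would want to flag it explicitly. At a point $x \in \mathcal{C}^1$ where $h^2(x)$ strictly dominates $h^1(x)$, the max lifts $h^3(x)$ above $h^1(x)$, so any $\lambda_3 < 1$ would force a control $u \in K^1(x)$ to satisfy $h^3(f(x,u)) \ge (1-\lambda_3)\, h^2(x) > 0$, a target that cannot be extracted from the $h^1$-based bound alone (a small affine example in one dimension is enough to exhibit this). Setting $\lambda_3 = 1$ collapses the right-hand side to zero and absorbs this gap cleanly, while still satisfying $0 \le \lambda_3 \le 1$ as required by the definition.
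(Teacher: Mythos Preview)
Your proof is correct and takes a genuinely different route from the paper. The paper retains the original $\lambda$ and argues by first splitting on whether $h^1(x) \ge h^2(x)$ (to identify $h^3(x)$) and then on whether $h^1(x_1) \ge h^2(x_1)$ (to identify $h^3(x_1)$), aiming in each case to show $\Delta h^3(x,u) + \lambda h^3(x) \ge \Delta h^1(x,u) + \lambda h^1(x) \ge 0$ directly. Your choice $\lambda_3 = 1$ collapses the target to $h^3(f(x,u)) \ge 0$, which follows immediately from $h^3 \ge h^1$ together with forward invariance of $\mathcal{C}^1$ under $K^1$-controls, so no case split on the next state is needed. This is not only shorter but also sidesteps the obstruction you correctly flagged: when $x \in \mathcal{C}^1$ with $h^2(x) > h^1(x)$ and $\lambda < 1$, the bound $h^1(f(x,u)) \ge (1-\lambda_1) h^1(x)$ coming from $u \in K^1(x)$ need not reach $(1-\lambda) h^2(x)$, and the paper's corresponding step $-h^2(x)+\lambda h^2(x) \ge -h^1(x)+\lambda h^1(x)$ reduces to $(\lambda-1)(h^2(x)-h^1(x)) \ge 0$, which fails for $\lambda<1$. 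Since $\lambda_3 = 1$ yields the largest admissible control set $K^3$ anyway, your approach loses nothing and is the more robust argument.
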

\begin{proof}
We first prove that $h^3$ is a DT-ECBF on $\mathcal{C}^1\cup\mathcal{C}^2$.
Suppose $x_0\in\mathcal{C}^1\cup\mathcal{C}^2$
and without loss of generality, assume $h^1(x_0) \ge h^2(x_0)$
so $h^3(x_0) = h^1(x_0)$.
Suppose $u_0\in U$ satisfies $\Delta h^1(x_0,u_0) + \lambda h^1(x_0) \ge 0$
and let $x_1 = f(x_0, u_0)$.
Such a $u_0$ exists because $h^1$ is a DT-ECBF.
Then
\begin{IEEEeqnarray*}{l}
\Delta h^3(x_0, u_0) + \lambda h^3(x_0) \\
\qquad= [\max(h^1(x_1),h^2(x_1)) - \max(h^1(x_0),h^2(x_0))] \\
\qquad \qquad + \lambda \max(h^1(x_0), h^2(x_0)) \\
\qquad = \max(h^1(x_1),h^2(x_1)) - h^1(x_0) + \lambda h^1(x_0).\IEEEyesnumber
\label{eq_max_h_barrier_constraint}
\end{IEEEeqnarray*}
\emph{Case 1: }
If $h^1(x_1) \ge h^2(x_1)$ then (\ref{eq_max_h_barrier_constraint})
becomes 
$\Delta h^3(x_0, u_0) + \lambda h^3(x_0) = \Delta h^1(x_0,u_0) + \lambda h^1(x_0) \ge 0.$

\noindent\emph{Case 2: }
If $h^1(x_1) < h^2(x_1)$ then (\ref{eq_max_h_barrier_constraint})
becomes
\begin{IEEEeqnarray*}{rCl}
\Delta h^3(x_0, u_0) + \lambda h^3(x_0) &=&
h^2(x_1) - h^1(x_0) + \lambda h^1(x_0) \\
&\ge& h^1(x_1) - h^1(x_0) + \lambda h^1(x_0) \\
&=& \Delta h^1(x_0, u_0) + \lambda h^1(x_0)\ge 0. 
\end{IEEEeqnarray*}
Then $h^3$ is a DT-ECBF.
This also establishes
$K^1(x_0) \subseteq K^3(x_0)$ on $\mathcal{C}^1\cup \mathcal{C}^2$
if $h^1(x_0) \ge h^2(x_0)$.
By the same logic, for $h^2(x_0) \ge h^1(x_0)$ with $x_0\in \mathcal{C}^1\cup\mathcal{C}^2$,
$K^2(x_0) \subseteq K^3(x_0)$.
\end{proof}
\vspace{-1em}
\begin{remark}
The optimization (\ref{eq_orig_nonconvex_optimization})
is non-convex so finding an online solution may infeasible.
A direct solution to this is
to assume $U$ is a small finite set
so (\ref{eq_orig_nonconvex_optimization})
can be solved with an exhaustive search.
However, when $h^1$ is defined via (\ref{eq_discrete_h})
for some $\gamma$, Theorem~\ref{th_discrete_h_zcbf}
demonstrates that $\gamma$ is always a feasible solution
of (\ref{eq_orig_nonconvex_optimization}) provided $h^1(x_k) \ge 0$
(and similarly for an evasive maneuver used to construct $h^2$ for $x_k\in\mathcal{C}^2$).
Because $K^1(x_k) \subseteq K^3(x_k)$ for all $x_k\in \mathcal{C}^1$,
this means that $\gamma$ is a feasible solution for
(\ref{eq_orig_nonconvex_optimization}) when using $h^3$
and $x_k\in\mathcal{C}^1$.
\end{remark}

The proof of Theorem~\ref{th_max_is_bf} showed that for $h^1(x_0) \ge h^2(x_0)$,
$K^1(x_0)\subseteq K^3(x_0)$ and we now show an example
where the set inclusion is strict.
In other words, by taking the maximum of two barrier functions, we can not only
expand the safe set but also expand the admissible control space.

\begin{example}
Consider again the system in Example~\ref{ex_double_integrator}
with the given $\rho$, $\lambda = 0.9$, and 
$x_0 = \begin{bmatrix} 2 & -1 \end{bmatrix}^T$.
Let $h^1$ and $h^2$ be as defined in (\ref{eq_discrete_h})
where $\gamma^1$ is defined by $\gamma^1(x_k) = 1$
and $\gamma^2$ is defined by $\gamma^2(x_k) = 5$
if $x_{k,0} \le 0.5$ and 0 otherwise.
Let $h^3$ be defined by $h^3(x_0) = \max(h^1(x_0), h^2(x_0))$.
Then a numerical calculation shows $h^1(x_0) = 0.45$,
$h^2(x_0) = 0.25$, $K^1(x_0) = \{u\::\: u \ge -2.56\}$,
$K^2(x_0) = \{u\::\: u \ge -6.78\}$, and $K^3(x_0) = \{u\::\: u \ge -6.48\}$.
In other words, $h^1(x_0) > h^2(x_0)$ and $K^1(x_0) \subset K^3(x_0)$.
\label{ex_strict}
\end{example}

\subsection{Practical Algorithm}

Here we discuss two updates to Algorithms~\ref{alg_initial_hat_overline_h}
and \ref{alg_iterative_h} to enable computationally and memory efficient model-free overrides.
First, while $h^L$ in Algorithm~\ref{alg_iterative_h}
is model-free,
a model is still required 
to use $h^L$ to compute an override.
This is because computing a solution to (\ref{eq_orig_nonconvex_optimization})
requires a calculation of $\Delta h(x_k,u_k)$ which necessitates
a model for the dynamics.
Thus, to make the final result of Algorithm~\ref{alg_iterative_h}
model-free we must also create a learned
function $\Delta \hat{h}$ in Algorithm~\ref{alg_initial_hat_overline_h}. 
To do so, record
the minimum $\rho_{min,1} = \rho(x_k)$ for $x_k = 1,\ldots,T$
in Algorithm~\ref{alg_initial_hat_overline_h}
and train $\Delta \hat{h}$ to predict $\rho_{min,1} - \rho_{min}$ given $x_0$ and $u_0$.
When Algorithm~\ref{alg_initial_hat_overline_h} outputs these two functions,
$\hat{h}$ and $\Delta \hat{h}$, a model-free override can be computed in (\ref{eq_orig_nonconvex_optimization}).

Second, the result of Algorithm~\ref{alg_iterative_h} is a set
of $L$ barrier functions. Theorem~\ref{th_max_is_bf}
says we can take the maximum of these $L$ barrier
functions to iteratively enlarge both the
safe set and admissible control space.
However,
this implies that $L$ barrier functions
must be maintained,
which implies
memory growth and reduces online computation
capability because $L$ models must be queried
at every step.
Thus, to avoid memory growth and improve online
computation,
we can instead adjust the dataset
of Algorithm~\ref{alg_initial_hat_overline_h}
in line~\ref{line_dataset}
as follows:

% \vspace{-0.8em}
\par\noindent\rule{0.95\columnwidth}{0.4pt}\\
\begin{small}
\textbf{\ref{line_dataset}}
\end{small}
\qquad\qquad append $\{x_0, \max(h(x_0),\rho_{min})\}$ to $D$;
\vspace{-0.3em}
\par\noindent\rule{0.95\columnwidth}{0.4pt}
% \vspace*{0.5em}

% \vspace{-1.0em}
% \begin{algorithm}
% \caption{Iteratively Expanding the Safe Set and the Admissible Control Space}
% \label{alg_iterative_max_h}
% \SetKwInOut{Input}{input}
% \SetKwInOut{Output}{output}
% \SetKwFunction{FuncExpandSafeSetMax}{ExpandSafeSetMax}
% \SetKwProg{Fn}{Function}{:}{}
%
% \Input{$h$ (barrier function),
%        $N$ (number of samples),
%        $\hat{u}$ (nominal controller),
%        $T$ (safety horizon),
%        $L$ (number of expansions)
%        }
% \Output{$\hat{h}^{L}$}
% $\hat{h}^0\leftarrow h$\;
% \For{$i\leftarrow 1$ \KwTo $L$}{%
%     $\hat{h}^i\leftarrow $ExpandSafeSetMax($h^{i-1}$, $\rho$, $N$, $T$)\;
% }
%
% \Fn{\FuncExpandSafeSetMax{$h$, $\rho$, $N$, $\hat{u}$, $T$}}{%
%     $D=\{\}$\;
%     \Repeat{$N$ times}{%
%         select a random $x_0$\;
%         $x\leftarrow x_0$\;
%         $\rho_{min}\leftarrow\rho(x)$\;
%         \For{$j\leftarrow 1$ \KwTo $T$}{%
%             $\gamma^1 \leftarrow $ from equation \eqref{eq_orig_nonconvex_optimization} using $x$, $h$, and $\hat{u}$\nllabel{line_solve_gamma1}\;
%             $x\leftarrow f(x,\gamma^{1})$
%                 \;
%             $\rho_{min}\leftarrow \min(\rho_{min},\rho(x))$\;
%         }
%         append $\{x_0,\max(h(x_0),\rho_{min})\}$ to $D$\nllabel{line_hi_max}\;
%     }
%     $\hat{h}^{1}\leftarrow$fit to $D$\;
%     \KwRet$\hat{h}^1$\;
% }
% \end{algorithm}
% \vspace{-2.0em}

\section{SIMULATION EXPERIMENTS}
\label{sec_model_free_simulation}

We now validate the approach of 
Algorithm~\ref{alg_initial_hat_overline_h}.
We restrict the action space of both vehicles to 
$[-12, 0, 12]$ degrees per second for $\omega$
while holding velocity fixed at $15$ m/s
and altitude rate at $0$.
The initial state for each vehicle is between
$\begin{bmatrix} -200 & -200 & -\pi & 0\end{bmatrix}^T$
and 
$\begin{bmatrix} 200 & 200 & \pi & 0\end{bmatrix}^T$.
Let $\rho(x) = \max(50,d_{1,2}(x) - D_s)$
where $d_{1,2}$ is the distance between the vehicles
and the max
simplifies data normalization. Note that this clipping does not change
$\mathcal{C}$. We let $D_s=25$,
used a learning rate of $1e-4$, 10000 epochs per iteration,
50\% dropout rate, $50$ samples to calculate $\sigma$,
and had $4$ layers of $1024$ nodes with relu activation. We trained the network with
a mean squared error loss.
To form an initial $h$, we ran 50,000 episodes using a waypoint following controller
without a barrier function and fit a mapping of the initial state to closest vehicle distance
for each episode.
Training statistics are in Fig~\ref{fig_training_data}.
During training, the percent of cases where the output minus $3\sigma$
is above the true value in the validation set is between $1$ and $2.5$ percent.

Fig~\ref{fig_safe_sets_stdevs} shows the unsafe set for
the mean value of the MFBF and
when 3$\sigma$ is subtracted.
The latter results in a larger unsafe set.
Fig~\ref{fig_safe_sets_iter} plots how the unsafe set 
is enlarged as the algorithm proceeds.
For iterations 1 to 5, we start each episode so that the barrier function is nonnegative.
The system with a nominal controller alone
had a collision rate of $(8.9, 8.8, 8.9, 8.8, 9.0)$ percent
vs the collision percentages of the system with the MFBF of $(0.0, 0.5, 0.8, 0.4, 0.5)$ percent
so
the number of collisions when using a MFBF
is less than 10\% of the nominal controller.
Additionally note that there are not zero collisions when using a MFBF as there is noise in fitting to the data.
Nevertheless, safety is significantly improved
over using the nominal controller alone.

% \begin{table}
% \caption{Hyperparameters when fitting a MFBF}
% \label{tbl_hyperparameters}
% \centering
% \begin{tabular}{l|c}
% Hyperparameter & Value \\\hline
% Learning Rate & 1e-4 \\
% Batch Size & 50000 \\
% Epochs & 10000 \\
% Dropout Percent & 50\% \\
% Num Samples for Stdev Calculation & 50 \\
% Number of Fully Connected Layers & 4 \\
% Width Per Layer & 1024 
% \end{tabular}
% \end{table}

\begin{figure}
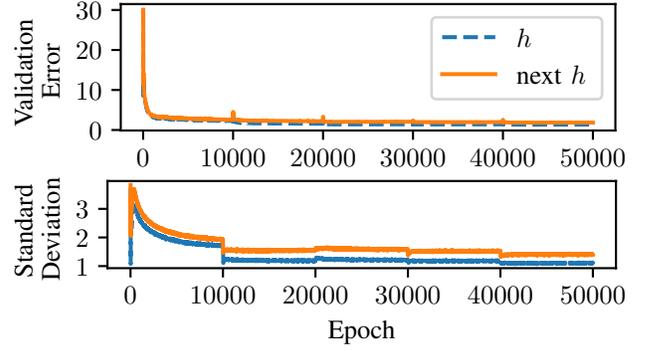

\centering
\begin{subfigure}{\columnwidth}
\centering
\input{imgs/exp/val_errs.pgf}
\end{subfigure}

\begin{subfigure}{\columnwidth}
\centering
\input{imgs/exp/stdevs.pgf}
\end{subfigure}

\caption{%
MFBF validation error (top)
and $\sigma$ (bottom).
}
\label{fig_training_data}
\vspace{-1em}
\end{figure}

\begin{figure}
\centering
\includegraphics[scale=0.8]{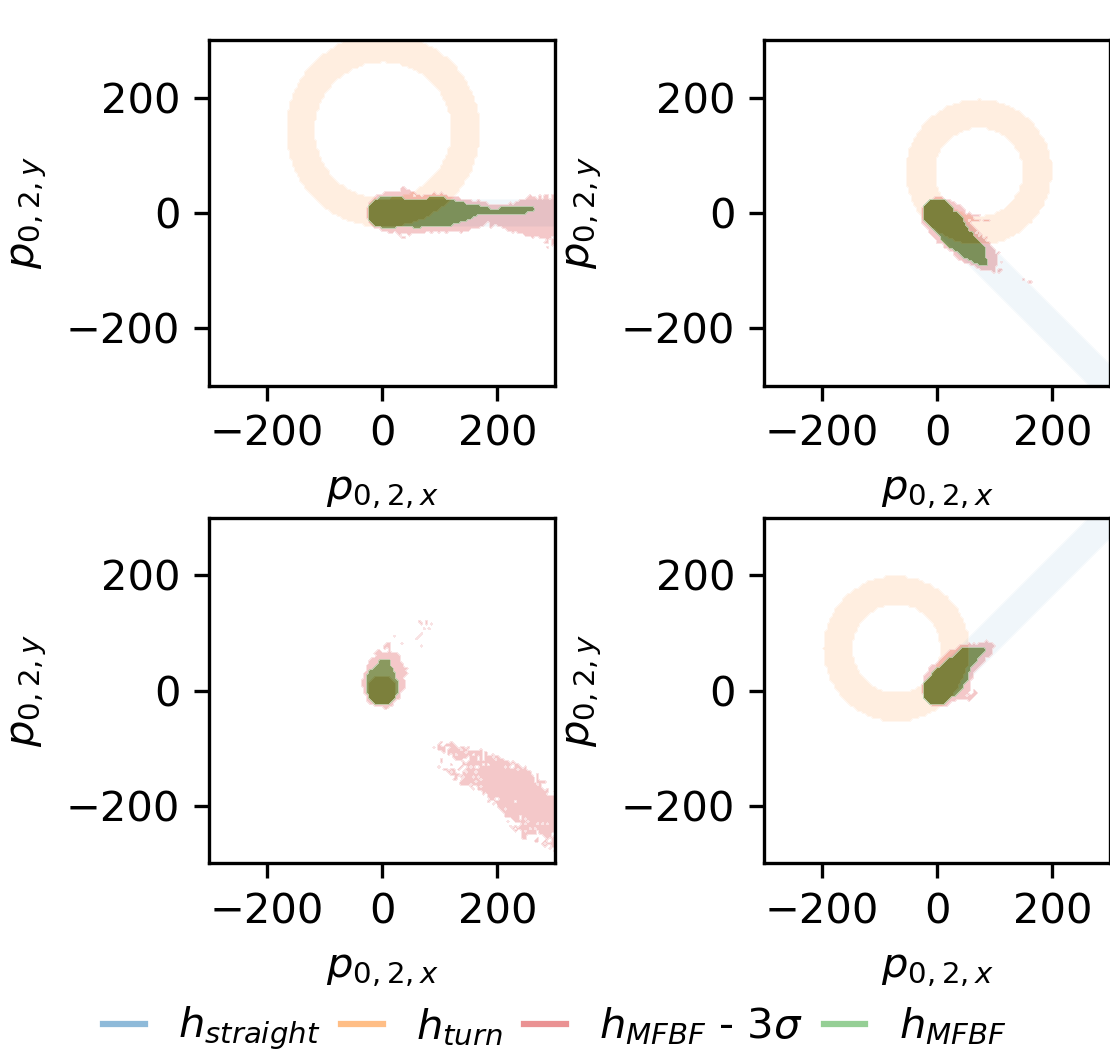}
\caption{
Points where $h(x) < 0$
given $x_1 = [ 0 \quad 0 \quad 0 \quad 0]^T$ (vehicle 1 is at the origin pointing right)
and 
vehicle 2 positions vary. Vehicle 2 orientation is left (top left), up (top right), right (bottom left), down (bottom right).
Training data was sampled from horizontal positions $(-200, -200)$ to $(200, 200)$
so out-of-sample points have higher uncertainty
causing more unsafe states.
}
\label{fig_safe_sets_stdevs}
\vspace{-1.0em}
\end{figure}

\begin{figure}
\centering
\includegraphics[scale=0.8]{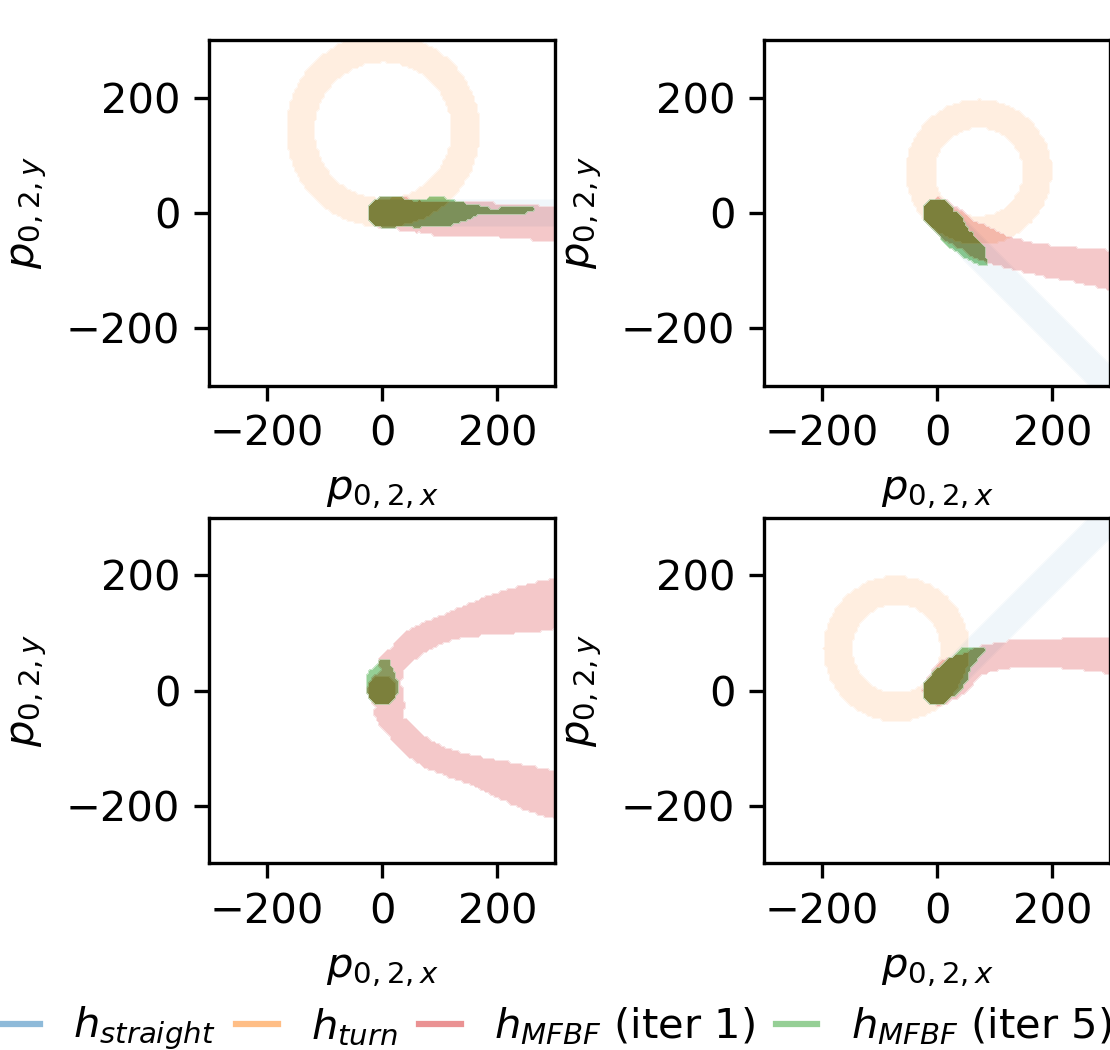}
\caption{%
The same setup as Fig \ref{fig_safe_sets_stdevs} but showing
how the unsafe set changes during training.
As predicted by Theorem~\ref{th_max_is_bf}, the MFBF unsafe set is smaller at iteration 5 than iteration 1.
}
\label{fig_safe_sets_iter}
\vspace{-1.0em}
\end{figure}

\section{CONCLUSION}

\label{sec_conclusion}

In this paper we discussed a few issues with model-based barrier
functions: they may label safe states as unsafe
(Example~\ref{ex_unsafe_states}), cause
unnecessary overrides that cause the state to get closer to the boundary of the
safe set than without an override (Example~\ref{ex_unnecessary_override}),
be difficult to solve for
a barrier function in closed form for complex systems ($h_{turn}$ and $h_{straight}$
exist due to closed form solutions but lead to large unsafe sets, see Fig.~\ref{fig_safe_sets_stdevs}),
and
be numerically infeasible to solve for
a barrier function when there is a long horizon (eq. (\ref{eq_discrete_h})).
Thus,
we introduced MFBFs
which take a data-driven approach
to developing a barrier function.
The tradeoff is that because the barrier
function cannot perfectly fit to the data,
safety guarantees are lost but the benefit
is that the safety set may be significantly
enlarged (Fig.~\ref{fig_safe_sets_stdevs}).
We demonstrated the efficacy of
the approach in a FW-UAV collision
avoidance scenario where, because of the
MFBF, the safety
of the system is significantly improved
over using a nominal controller alone.

\addtolength{\textheight}{-0cm}

\vspace{-0.5em}
\bibliographystyle{IEEEtran}
\bibliography{barrier}

\end{document}